\documentclass[11pt,a4paper]{article}
\usepackage[utf8]{inputenc}
\usepackage[T1]{fontenc}
\usepackage{lmodern}
\usepackage[margin=1in]{geometry}
\usepackage{cite}
\usepackage{graphicx}
\usepackage{epstopdf}
\usepackage{physics}
\usepackage[caption=false,font=footnotesize]{subfig}
\usepackage{amsmath,amssymb,amsfonts,amsthm}
\usepackage{multirow}
\usepackage{makecell}
\usepackage{booktabs}
\usepackage{float}
\usepackage[normalem]{ulem}
\usepackage{xcolor}
\usepackage{dcolumn}
\usepackage{tabu}
\usepackage{bm}
\usepackage{hyperref}
\hypersetup{colorlinks=true,linkcolor=blue,citecolor=blue,urlcolor=cyan}

\def\1{\mbox{\boldmath$1$}}

\def\B{\mbox{\boldmath$B$}}
\def\C{\mbox{\boldmath$C$}}
\def\D{\mbox{\boldmath$D$}}

\def\H{\mbox{\boldmath$H$}}
\def\L{\mbox{\boldmath$L$}}
\def\I{\mbox{\boldmath$I$}}

\def\K{\mbox{\boldmath$K$}}
\def\M{\mbox{\boldmath$M$}}
\def\N{\mbox{\boldmath$N$}}

\def\P{\mbox{\boldmath$P$}}
\def\Q{\mbox{\boldmath$Q$}}
\def\R{\mbox{\boldmath$R$}}
\def\S{\mbox{\boldmath$S$}}

\def\W{\mbox{\boldmath$W$}}
\def\X{\mbox{\boldmath$X$}}
\def\Y{\mbox{\boldmath$Y$}}

\newcommand{\LambdaB}{\bm{\Lambda}}

\def\b{\mbox{\boldmath$b$}}

\def\f{\mbox{\boldmath$f$}}
\def\g{\mbox{\boldmath$g$}}

\def\q{\mbox{\boldmath$q$}}

\def\t{\mbox{\boldmath$t$}}
\def\u{\mbox{\boldmath$u$}}
\def\v{\mbox{\boldmath$v$}}

\def\x{\mbox{\boldmath$x$}}
\def\y{\mbox{\boldmath$y$}}
\def\z{\mbox{\boldmath$z$}}
\def\u{\mbox{\boldmath$u$}}

\def\0{\mbox{\boldmath$0$}}

\theoremstyle{plain}
\newtheorem{theorem}{Theorem}[section]
\newtheorem{proposition}[theorem]{Proposition}

\theoremstyle{definition}
\newtheorem{definition}[theorem]{Definition}
\newtheorem{assumption}[theorem]{Assumption}
\theoremstyle{remark}
\newtheorem{remark}[theorem]{Remark}
\newcommand*\proofdeltaVtitle{PROOF OF EQ.~\eqref{eq:deltaV}}
\newcommand*\proofSlopeTitle{PROOF OF EQ.~\eqref{eq:slope0}}

\begin{document}
\title{Learning Dissipative Dynamics with Dissipativity-by-Construction Discrete-Time Neural Networks}
\author{Tuan Luong \and Hyungpil Moon}
\date{}
\maketitle
\begin{center}
\small Department of Mechanical Engineering, Sungkyunkwan University,\\
Suwon 16419, South Korea\\[0.3em]
\texttt{luongtuan@g.skku.edu} \quad \texttt{hyungpil@g.skku.edu}\\[0.3em]
Corresponding author: Hyungpil Moon (\texttt{hyungpil@g.skku.edu})
\end{center}
\begin{center}
\footnotesize This material is based upon work supported by the Air Force Office of Scientific Research under award number FA2386-24-1-4039. This work was supported by the National Research Foundation of Korea (NRF) grant funded by the Korea government (MSIT) (RS-2025-24683341).
\end{center}

\begin{abstract}
  Dissipativity is a fundamental system-theoretic property closely related to stability, passivity, and input--output stability, and is particularly important in robotics, where learned dynamics models are often embedded within feedback control loops. However, most existing approaches for learning dissipative dynamics are based on continuous-time formulations, which require ODE solvers during training or inference and can therefore be computationally expensive. Moreover, because practical implementations are inherently discrete-time, direct discretization of a continuous-time passive system does not necessarily preserve passivity, motivating the need for explicit discrete-time guarantees. This study proposes a method for learning incrementally dissipative dynamics from input--output time-series data using a deep multilayer perceptron formulated directly in discrete time. Through a constrained parameterization and a dedicated training procedure, the proposed model guarantees incremental dissipativity by construction rather than through regularization. Lyapunov-based analysis establishes the corresponding dissipativity and stability guarantees, while simulations on robotic dynamical systems demonstrate competitive prediction accuracy, computational efficiency, and consistent preservation of incremental dissipativity compared with baseline methods.
\end{abstract}

\noindent\textbf{Keywords:} Physical AI, dissipative neural networks, dissipativity, stable system modeling, Lyanpunov stability

\section{Introduction}
Learning-based models of dynamical systems are increasingly used in simulation, prediction, and control pipelines. Compared with first-principles modeling, data-driven approaches can achieve high accuracy even when physical parameters are uncertain, unmodeled dynamics exist, or analytical modeling is intractable~\cite{lee2017gp}. However, despite strong short-horizon performance, purely data-driven models can suffer from distribution shift, compounding errors over long rollouts, and instability in feedback loops. This is mainly because standard neural architectures are optimized for pointwise prediction accuracy without explicitly enforcing structural properties of physical systems. As a result, even when training trajectories are stable, the learned dynamics may produce unstable or physically implausible behavior outside the training distribution~\cite{okamoto2025learning}. Therefore, ensuring stability and reliability is a key design objective for learned dynamics models intended for long-horizon simulation, control, and generalization.

To address the problem, recent research has explored ways to incorporate stability principles directly into learning frameworks. One direction leverages Lyapunov-based ideas to constrain or certify stability of neural systems, aiming to connect learning objectives with provable dynamical guarantees~\cite{chang2019neural}. Another direction focuses on continuous-time modeling through neural ordinary differential equations, providing a flexible representation that aligns naturally with physics-based time evolution~\cite{chen2018neural}. In parallel, theoretical approaches such as contraction-based learning of stable dynamical systems have been proposed to encourage globally consistent behavior~\cite{blocher2017learning}. These approaches highlight that enforcing structure can be essential for producing models that behave predictably beyond the training regime.

We address the problem of integrating incremental dissipativity into a neural network architecture for learning dynamical systems. Dissipativity~\cite{ren2020exponentially, koelewijn2021incremental} is important because it provides an energy-based framework for analyzing stability, robustness, and input--output consistency of dynamical systems. In particular, incremental dissipativity constrains the evolution of differences between trajectories through an incremental storage function and supply rate, preventing deviations between two input--output trajectories from amplifying beyond the energy injected by their input differences~\cite{lozano2013dissipative}. This property is especially useful for closed-loop and interconnected systems, since dissipative systems preserve desirable stability properties under interconnection; for example, parallel and negative-feedback interconnections of passive systems remain dissipative~\cite{bao2007process}. 
Although learning dissipative models has been extensively studied, most existing approaches focus on continuous-time dynamics. For examples, some approaches rely on specific port-Hamiltonian parameterizations~\cite{drgovna2022dissipative}, consider incremental dissipativity with multi-stage learning procedures~\cite{xu2023learning}, or use continuous-time Neural ODE projections to enforce dissipativity~\cite{okamoto2025learning}, which can introduce high computational cost and may affect modeling accuracy.

In this work, we pursue a different viewpoint: instead of treating dissipativity as an auxiliary penalty or post-hoc certificate, we design the hypothesis class itself so that the learned discrete-time dynamics are incrementally dissipative by construction. 
Specifically, we develop a deep discrete-time neural network architecture whose parameters are explicitly constrained to satisfy incremental dissipativity throughout training. 
This provides two main advantages: first, the learned model inherently preserves the desired dissipativity property without requiring projection onto a dissipativity-preserving space; second, the discrete-time formulation avoids the high computational cost associated with Neural ODE-based methods such as~\cite{okamoto2025learning}. 
Our computational complexity analysis and simulation results show that the proposed method achieves competitive prediction accuracy while being significantly more computationally efficient than continuous-time Neural ODE approaches. 
Moreover, since digital devices operate in discrete time, the proposed framework is naturally suited for practical deployment on digital hardware and embedded control platforms.
 
The remaining of this paper is organized as follows. 
Section~\ref{sec:nnmodel} presents the incrementally dissipative neural network model and its learning algorithm. 
Case studies demonstrating the performance of the proposed network and evaluations are presented in Section~\ref{sec:casestudy} and \ref{sec:evaluation}. 
We conclude the paper in Section~\ref{sec:conclusion}.
\section{Method} \label{sec:nnmodel}
\subsection{Problem formulation}
In this work, we consider discrete-time nonlinear systems of the form
\begin{equation}\label{eq:ss1}
    \x_{k+1} = \f(\x_{k}, \u_{k})
\end{equation}
\begin{equation}\label{eq:ss2}
    \y_{k} = \g(\x_{k}, \u_{k})
\end{equation}
where $\x_{k} \in \mathbb{R}^{n \times 1}$, $\u_{k} \in \mathbb{R}^{m \times 1}$, and $\y_{k} \in \mathbb{R}^{p \times 1}$ are the state, input, and output at time step $k$, respectively. We assume $\f(0)=0$ and $\g(0)=0$.
\begin{definition}
The system in Eqs.~\eqref{eq:ss1}--\eqref{eq:ss2} is said to be 
$(\Q,\S,\R)$-dissipative, where
$\Q \preceq 0$, $\Q \in \mathbb{R}^{p\times p}$,
$\S \in \mathbb{R}^{m\times p}$, and
$\R=\R^T \in \mathbb{R}^{m\times m}$,
if there exists a nonnegative storage function
$V_k(\Delta\x_k)\geq 0$ such that
\begin{equation}
\label{eq:qsrCond}
\begin{aligned}
&V_{k+1}(\Delta\x_{k+1})-V_k(\Delta\x_k)
\\
&\quad\leq
\begin{bmatrix}
\Delta\y_k\\
\Delta\u_k
\end{bmatrix}^{T}
\begin{bmatrix}
\Q & \S^T\\
\S & \R
\end{bmatrix}
\begin{bmatrix}
\Delta\y_k\\
\Delta\u_k
\end{bmatrix}.
\end{aligned}
\end{equation}
\end{definition}

Some important special cases of $(\Q, \S, \R)$ dissipativity include passivity and finite-gain $\mathcal{L}_2$ stability. In particular, a passive system corresponds to $\Q=\0$, $\R=\0$, and $\S=\frac{1}{2}\I$. An $\mathcal{L}_2$ system is dissipative with $\S=\0$ and $\Q=\frac{1}{\xi}\I$, where $\xi$ is the $\mathcal{L}_2$ gain of the system~\cite{brogliato2007dissipative}.

When the input--output data $(\u_k,\y_k)$ are given and modeled using a neural network, the system is represented as
\begin{equation}\label{eq:rnn}
    \x_{k+1} = f_{nn}(\x_{k},\u_k, \theta_{nn})
\end{equation}
\begin{equation}\label{eq:output}
    \hat{\y}_k = \C \x_{k} + \D \u_k + \b_y
\end{equation}
and the parameters are trained by solving
\begin{equation}\label{eq:nn1}
    \min_{\substack{\theta_{nn}, \C, \D}} \quad \sum_{i=1}^{N} \mathcal{L}(\hat{\y}_i, \y_i),
\end{equation}
where $\theta_{nn}$ are trainable weights of the network, $\C \in \mathbb{R}^{p \times n}$, $\D \in \mathbb{R}^{p \times m}$, $\b_y \in \mathbb{R}^{p \times 1}$, and $\hat{\y}_i, \y_i$ are the predicted output and the ground truth at time step $i$, respectively. It should be noted here that since state dynamics is nonlinear, the input-output relationship in Eqs.~(\ref{eq:rnn})-(\ref{eq:output}) is nonlinear. Indeed, the same discrete-time system structure has been successfully used to represent various types of nonlinear dynamics such as in~\cite{d2023incremental}. Our contribution goes beyond input--output modeling by preserving the dissipative properties of incrementally dissipative systems through the network parameter construction. 
In Section~\ref{sec:lmi_cond}, we derive a condition under which the discrete-time MLP system in Eqs.~(\ref{eq:rnn})--(\ref{eq:output}) is $\Q,\S,\R$-dissipative. 
Section~\ref{sec:model} then presents a training algorithm that enforces this dissipativity condition by construction, without solving iterative LMI problems.

\vspace{0.25em}
\noindent
\subsection{Necessary condition of a discrete-time dissipative neural network} \label{sec:lmi_cond}
Expressing the state dynamics using a discrete-time multilayer perceptron neural network, Eq.~\eqref{eq:rnn} can be expanded as follows
\begin{equation}\label{eq:newnn1}
\begin{aligned}
\z^{1}_{k} &= \phi(\W^{1}_{1} \u_k + \W^{2}_{1} \x_{k}+\b_{1}) \\
\z^2_{k} &= \phi(\W^2_{2} \z^1_{k}+\b_{2}) \\
&\vdots \\
\z^{L}_{k} &= \phi(\W^{L}_{2} \z^{L-1}_{k}+\b_{L}),
\end{aligned}
\end{equation}
\begin{equation}
    \x_{k+1} = \B \z^{L}_{k} + \b_x,
\end{equation}
\begin{equation}
    \hat{\y}_k = \C \x_{k} + \D \u_k + \b_y,
\end{equation}
where $\W^{1}_{1} \in \mathbb{R}^{q \times m}$, $\W^{2}_{1} \in \mathbb{R}^{q \times n}$, $\b_{i} \in \mathbb{R}^{q \times 1}$ $(i=1,\cdots,L)$, $\W^{i}_{2} \in \mathbb{R}^{q \times q}$ $(i=2,\cdots,L)$, $\B \in \mathbb{R}^{n \times q}$, and $\b_{x} \in \mathbb{R}^{n \times 1}$. The activation function is denoted by $\phi(\cdot)$.

Eq.~\eqref{eq:newnn1} can be represented in a stacked form as
\begin{equation}\label{eq:ss1_3}
    \x_{k+1} = \B \B_0 \phi(\W_{u} \u_k + \W_{1} \x_{k} + \W_{2} \z_{k}+\b_{z}) + \b_x
\end{equation}
\begin{equation}\label{eq:ss2_3}
    \hat{\y}_k = \C \x_{k} + \D \u_k + \b_y
\end{equation}
where
\begin{equation}
\label{eq:B0_Wu}
\begin{aligned}
\B_0 &=
\begin{bmatrix}
\0 & \0 & \cdots & \I^{q\times q}
\end{bmatrix}
\in \mathbb{R}^{q\times Lq},
\\
\W_u &=
\begin{bmatrix}
\W^{1}_{1} \\
\0 \\
\vdots \\
\0
\end{bmatrix}
\in \mathbb{R}^{Lq\times m}.
\end{aligned}
\end{equation}

\begin{equation}
\label{eq:W1_W2}
\begin{aligned}
\W_1 &=
\begin{bmatrix}
\W^{2}_{1} \\
\0 \\
\vdots \\
\0
\end{bmatrix}
\in \mathbb{R}^{Lq\times n},
\\[1mm]
\W_2 &=
\begin{bmatrix}
\0 & \0 & \cdots & \0 \\
\W^{2}_{2} & \0 & \cdots & \0 \\
\vdots & \ddots & \ddots & \vdots \\
\0 & \cdots & \W^{L}_{2} & \0
\end{bmatrix}
\in \mathbb{R}^{Lq\times Lq}.
\end{aligned}
\end{equation}
\begin{equation}
\begin{aligned}
\b_z &=
\begin{bmatrix}
\b_1^T & \b_2^T & \cdots & \b_L^T
\end{bmatrix}^T
\in \mathbb{R}^{Lq\times 1},
\\
\z_k &=
\begin{bmatrix}
(\z_k^1)^T & (\z_k^2)^T & \cdots & (\z_k^L)^T
\end{bmatrix}^T
\in \mathbb{R}^{Lq\times 1}.
\end{aligned}
\end{equation}
\begin{assumption}\label{assump1}
The activation function $\phi$ is piecewise differentiable and slope-restricted as
\begin{equation}
0 \leq \alpha \leq \frac{\phi(x_2)-\phi(x_1)}{x_2-x_1} \leq \beta.
\end{equation}
\end{assumption}
This assumption holds for widely used activation functions such as ReLU, Leaky ReLU ($\alpha > 0$ for stable implementation), sigmoid, tanh, softplus, ELU, SELU, Hard Sigmoid, hard tanh. While the dissipativity condition can be extended to general finite values of $\alpha$ and $\beta$, the proposed training algorithm focuses on the case $\alpha \geq 0$. One possible solution for the case $\alpha<0$ is to reformulate the problem using the shifted activation function $\tilde{\phi}(\v)=\phi(\v)-\alpha \v$. This extension will be investigated in our future work.

\begin{proposition}
The systems described in \ref{eq:ss1_3}--\ref{eq:ss2_3} are $(\Q, \S, \R)$ dissipative if there exists a positive definite matrix $\P = \P^T \succ 0$ such that
\begin{equation}
\label{eq:lmi}
\begin{aligned}
&\left[
\begin{array}{ccc}
\P+\C^T\Q\C
& -\gamma\W_{1}^T\LambdaB
& \C^T\S^T+\C^T\Q\D
\\
-\gamma\LambdaB\W_{1}
& \LambdaB^*
& -\gamma\LambdaB\W_u
\\
\S\C+\D^T\Q\C
& -\gamma\W_u^T\LambdaB
& \R_1
\end{array}
\right]
\\[-1mm]
&\quad
+\rho
\begin{bmatrix}
\W_1^T\\
\W_2^T\\
\W_u^T
\end{bmatrix}
\LambdaB
\begin{bmatrix}
\W_1 & \W_2 & \W_u
\end{bmatrix}
\succ 0 .
\end{aligned}
\end{equation}
where $\LambdaB^* = \LambdaB - \gamma \W_{2}^* - \B_0^T\B^T \P \B\B_0$, $\W^*_{2} = \LambdaB \W_{2} + \W_{2}^T \LambdaB$, $\R_1 = \R + \S \D + \D^T \S^T + \D^T \Q \D,$ and $\LambdaB = \mathrm{diag}(\lambda_1,\lambda_2,\cdots,\lambda_{Lq})$ with $\lambda_i>0$ $(i=1,\cdots,Lq)$.
\end{proposition}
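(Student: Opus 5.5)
The plan is to take the quadratic incremental storage function $V_k(\Delta\x_k)=\Delta\x_k^T\P\Delta\x_k\geq 0$ and check \eqref{eq:qsrCond} along any two trajectories of \eqref{eq:ss1_3}--\eqref{eq:ss2_3} driven by inputs $\u_k^a,\u_k^b$. The check is an S-procedure argument: \eqref{eq:lmi} should be exactly the statement that ``dissipation inequality minus a nonnegative multiple of the slope-restriction constraint'' is positive definite.

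\textbf{Step 1: incremental variables.} Write $\v_k=\W_u\u_k+\W_1\x_k+\W_2\z_k+\b_z$ and $\Delta\z_k=\phi(\v_k^a)-\phi(\v_k^b)$. Because $\W_2$ is strictly block lower triangular, the stacked relation $\z_k=\phi(\v_k)$ is just the layer-by-layer recursion \eqref{eq:newnn1}. Hence it is well posed, and the same $\Delta\z_k$ appears both inside $\v_k$ and as the argument of $\B\B_0$. The biases cancel in differences, which gives
\begin{equation*}
\Delta\x_{k+1}=\B\B_0\Delta\z_k,\qquad
\Delta\v_k=\W_1\Delta\x_k+\W_2\Delta\z_k+\W_u\Delta\u_k,\qquad
\Delta\y_k=\C\Delta\x_k+\D\Delta\u_k .
\end{equation*}
Set $\bm{\xi}=[\Delta\x_k^T\ \Delta\z_k^T\ \Delta\u_k^T]^T$ and $\bm{W}=[\W_1\ \W_2\ \W_u]$, so that $\Delta\v_k=\bm{W}\bm{\xi}$.

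\textbf{Step 2: dissipation inequality as a quadratic form.} Substituting these expressions, the supply rate minus $V_{k+1}-V_k$ becomes $\bm{\xi}^T\bm{N}\bm{\xi}$. The blocks of $\bm{N}$ are as follows:
\begin{itemize}
\item $(1,1)$ block: $\P+\C^T\Q\C$;
\item $(2,2)$ block: $-\B_0^T\B^T\P\B\B_0$;
\item $(3,3)$ block: $\R_1=\R+\S\D+\D^T\S^T+\D^T\Q\D$;
\item $(1,3)$ block: $\C^T\S^T+\C^T\Q\D$;
\item all other off-diagonal blocks are zero.
\end{itemize}
Proving \eqref{eq:qsrCond} therefore amounts to showing $\bm{\xi}^T\bm{N}\bm{\xi}\geq 0$ for every admissible $\bm{\xi}$. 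Here admissible means that $\Delta\z_k$ is produced by $\phi$ from $\Delta\v_k$.

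\textbf{Step 3: slope-restriction multiplier.} By Assumption~\ref{assump1}, applied componentwise (trivially when $\Delta v_i=0$, since then $\Delta z_i=0$), each coordinate satisfies $(\Delta z_i-\alpha\Delta v_i)(\beta\Delta v_i-\Delta z_i)\geq 0$. Weighting coordinate $i$ by $\lambda_i>0$ and summing gives
\begin{equation*}
-\Delta\z_k^T\LambdaB\Delta\z_k+(\alpha+\beta)\Delta\z_k^T\LambdaB\Delta\v_k-\alpha\beta\,\Delta\v_k^T\LambdaB\Delta\v_k\geq 0 .
\end{equation*}
Take $\gamma=(\alpha+\beta)/2$ and $\rho=\alpha\beta\ge 0$, and substitute $\Delta\v_k=\bm{W}\bm{\xi}$. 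Symmetrizing the cross terms writes the constraint as $\bm{\xi}^T\bm{M}\bm{\xi}\ge 0$. The blocks of $\bm{M}$ are:
\begin{itemize}
\item $(2,2)$ block: $-\LambdaB+\gamma(\LambdaB\W_2+\W_2^T\LambdaB)=-\LambdaB+\gamma\W_2^*$;
\item $(2,1)$ block: $\gamma\LambdaB\W_1$;
\item $(2,3)$ block: $\gamma\LambdaB\W_u$;
\item a global correction $-\rho\,\bm{W}^T\LambdaB\bm{W}$ added to the whole matrix.
\end{itemize}

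\textbf{Step 4: conclusion.} A direct block comparison shows that the left-hand side of \eqref{eq:lmi} is exactly $\bm{N}-\bm{M}$. For example, the $(2,2)$ block is $-\B_0^T\B^T\P\B\B_0+\LambdaB-\gamma\W_2^*=\LambdaB^*$, and the off-diagonal blocks are $-\gamma\LambdaB\W_1$ and $-\gamma\LambdaB\W_u$, as displayed. Hence \eqref{eq:lmi} gives $\bm{\xi}^T\bm{N}\bm{\xi}\ge\bm{\xi}^T\bm{M}\bm{\xi}\ge 0$, which is \eqref{eq:qsrCond} with $V=\Delta\x^T\P\Delta\x$.

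The main obstacle is the bookkeeping in Steps 1 and 3. First, one must justify that the implicit stacked form really lets the same increment $\Delta\z_k$ appear both inside $\Delta\v_k$ and as the argument of $\B\B_0$; this is where the nilpotent structure of $\W_2$ is used. Second, the signs and factors of two in symmetrizing $(\alpha+\beta)\Delta\z^T\LambdaB\Delta\v$ must be tracked so that they land exactly as $\gamma$ and $\rho$ in \eqref{eq:lmi}.
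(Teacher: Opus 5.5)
Your proposal is correct and follows essentially the same route as the paper. Both arguments use the quadratic storage $\Delta\x_k^T\P\Delta\x_k$, multiply \eqref{eq:lmi} by $[\Delta\x_k^T\ \Delta\z_k^T\ \Delta\u_k^T]$, and invoke the slope-restriction quadratic constraint with diagonal multiplier $\LambdaB$, $\gamma=\frac{1}{2}(\alpha+\beta)$, $\rho=\alpha\beta$ (Appendices~\ref{sec:provedeltaV}--\ref{sec:slopeproof}); you merely repackage this as the identity ``left-hand side of \eqref{eq:lmi} $=\bm{N}-\bm{M}$''. Your bookkeeping is in places cleaner than the paper's, since you use the correct increments $\Delta\x_{k+1}=\B\B_0\Delta\z_k$ and $\Delta\v_k=\W_1\Delta\x_k+\W_2\Delta\z_k+\W_u\Delta\u_k$, and you justify well-posedness of the stacked form via the strictly block lower-triangular $\W_2$.
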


\begin{proof}
Consider the storage function
\begin{equation}\label{eq:Vkz}
    V_k(\Delta \x_k) = \Delta \x_k^T \P \Delta \x_k.
\end{equation}
Left and right multiplying \eqref{eq:lmi} by $[\Delta \x_k^T \quad \Delta \z_k^T \quad \Delta \u_k^T]$ and its transpose yields (see Appendix~\ref{sec:provedeltaV})
\begin{equation}
\label{eq:deltaV}
\begin{aligned}
&V_{k+1}(\Delta\x_{k+1})
- V_k(\Delta\x_k)
\\
&\leq
\begin{bmatrix}
\Delta\y_k\\
\Delta\u_k
\end{bmatrix}^{T}
\begin{bmatrix}
\Q & \S^{T}\\
\S & \R
\end{bmatrix}
\begin{bmatrix}
\Delta\y_k\\
\Delta\u_k
\end{bmatrix}
\\
&\quad+
\begin{bmatrix}
\Delta\v_k\\
\Delta\z_k
\end{bmatrix}^{T}
\begin{bmatrix}
\rho\LambdaB & -\gamma\LambdaB\\
-\gamma\LambdaB & \LambdaB
\end{bmatrix}
\begin{bmatrix}
\Delta\v_k\\
\Delta\z_k
\end{bmatrix}
\\
&\leq
\begin{bmatrix}
\Delta\y_k\\
\Delta\u_k
\end{bmatrix}^{T}
\begin{bmatrix}
\Q & \S^{T}\\
\S & \R
\end{bmatrix}
\begin{bmatrix}
\Delta\y_k\\
\Delta\u_k
\end{bmatrix}.
\end{aligned}
\end{equation}
where $\Delta \v_k = \Delta \W_u \u_k + \Delta \W_1 \x_k + \Delta \W_2 \z_k$. Moreover, using the slope restriction (see Appendix~\ref{sec:slopeproof}),
\begin{multline}\label{eq:slope0}
\begin{bmatrix}
\v_k\\
\z_k
\end{bmatrix}^T
\begin{bmatrix}
\rho\LambdaB & -\gamma\LambdaB\\
-\gamma\LambdaB & \LambdaB
\end{bmatrix}
\begin{bmatrix}
\v_k\\
\z_k
\end{bmatrix}
\le 0
\end{multline}
Combining \ref{eq:deltaV} and \ref{eq:slope0} yields the dissipativity condition \ref{eq:qsrCond}.
\end{proof}

\begin{remark}
Since the second term in Eq.~(\ref{eq:lmi}) is positive semidefinite for $\rho \ge 0$ and $\LambdaB \succ 0$, a sufficient (more conservative) condition for \ref{eq:lmi} is
\begin{equation}\label{eq:lmi2}
\begin{aligned}
\begin{bmatrix}
\P + \C^T \Q \C
& -\gamma \W_{1}^T \LambdaB
& \C^T \S^T + \C^T \Q \D \\

-\gamma \LambdaB \W_{1}
& \LambdaB^*
& -\gamma \LambdaB \W_u \\

\S \C + \D^T\Q\C
& -\gamma \W_u^T \LambdaB
& \R_1
\end{bmatrix}
\succ 0.
\end{aligned}
\end{equation}
\end{remark}

\subsection{Training algorithm for the dissipative network}
\label{sec:model}

The following procedure finds network parameters that satisfy the constraint in \eqref{eq:lmi}.

\noindent
\textbf{Step 1:} Choose $\D$ such that
\begin{equation}
\R_1 = \R + \S \D + \D^T \S^T + \D^T \Q \D \succ 0
\quad
\text{(see Appendix~\ref{sec:D}).}
\end{equation}

\noindent
If $\R_1\succ 0$, the LMI in \eqref{eq:lmi} can be rewritten using the Schur complement as
\begin{equation}
\label{eq:lmi1}
\begin{aligned}
&
\begin{bmatrix}
\P+\C^T\Q\C
& -\gamma\W_1^T\LambdaB
\\
-\gamma\LambdaB\W_1
& \LambdaB-\gamma\W_2^*-\P
\end{bmatrix}
\\[1mm]
&\quad -
\begin{bmatrix}
\C^T(\S^T+\Q\D)
\\
-\gamma\LambdaB\W_u
\end{bmatrix}
\R_1^{-1}
\begin{bmatrix}
\C^T(\S^T+\Q\D)
\\
-\gamma\LambdaB\W_u
\end{bmatrix}^{T}
\succ 0 .
\end{aligned}
\end{equation}

\noindent
\textbf{Step 2:} Choose $\C$, $\B$, and $\LambdaB\W_u$ as free variables, and find $\P$, $\LambdaB$ such that
\begin{multline}\label{eq:lmi2_train}
\begin{bmatrix}
\P + \C^T \Q \C & -\gamma\W_1^T \LambdaB \\
-\gamma\LambdaB  \W_1 & \LambdaB -\gamma \W^*_{2} -\P
\end{bmatrix}
\succ \X^T \X \\
+\begin{bmatrix}
\C^T \S^T + \C^T \Q \D \\
-\gamma \LambdaB  \W_u
\end{bmatrix}
\R_1^{-1}
[\S \C + \D^T \Q \C \quad  -\gamma \W_u^T \LambdaB ]
\end{multline}
Denote
\begin{multline}
\H =
\begin{bmatrix}
\H_{11} & \H_{12} \\
\H_{21} & \H_{22}
\end{bmatrix}
= \X \X^T \\
+ \begin{bmatrix}
\C^T \S^T + \D^T \Q \C \\
-\gamma\LambdaB \W_u
\end{bmatrix}
\R_1^{-1}
[\S \C + \C^T \Q \D \quad  -\gamma \W_u^T\LambdaB]
\end{multline}

We construct $\H^*$ satisfying
\begin{multline}\label{eq:lmi0}
\H^{*}=
\begin{bmatrix}
\H_{11} + \epsilon_1\I & \H_{12}^* + \Y\\
\H_{12}^{*T} + \Y^T &  \H_{22}+ \epsilon_1\I
\end{bmatrix}
\succ
\begin{bmatrix}
\H_{11} & \H_{12}\\
\H_{12}^T &  \H_{22}
\end{bmatrix},
\end{multline}
where $\H_{12}^*$ is obtained from $\H_{12}$ by zeroing out all non-diagonal block matrices while preserving the diagonal blocks in the same form as $\W_1$ in \eqref{eq:W1_W2}. The matrix $\Y \in \mathbb{R}^{Lq \times Lq}$ is trainable and has the same structure as $\W_1$. A sufficient condition for \eqref{eq:lmi0} is
\begin{equation}
\epsilon_1 \geq \norm{\H_{12}^* + \Y-\H_{12}}_2.
\end{equation}

Finally, $\P$ and $\gamma \W_1^T\LambdaB$ are chosen as
\begin{equation}\label{eq:P}
    \P = \H_{11}-\C^T \Q \C + \epsilon_1\I \succ 0,
\end{equation}
\begin{equation}\label{eq:Wz_cond2}
    -\gamma \W_1^T\LambdaB  = \H_{12}^* + \Y.
\end{equation}
By choosing $\B$ freely and selecting $\LambdaB$ as
\begin{equation}\label{eq:lambda_cond2}
\begin{aligned}
\LambdaB =
\Bigg(
&\max_i \Bigg|
eig_i\Big(
\H_{22}
+\B_0^T\B^T\P\B\B_0\\
&\qquad
+\gamma\W^*_{2}
+\epsilon_1\I
\Big)
\Bigg|
+\epsilon_2
\Bigg)\I.
\end{aligned}
\end{equation}
with $\epsilon_2 > 0$, the constraint \eqref{eq:lmi1} is satisfied.

\subsection{Improved training algorithm} 
It is observed that the matrices $\W_2^{i}$ for $i=2,\ldots,L$ significantly affect the performance of deep neural networks as the depth $L$ increases. In particular, if $\|\W_2^{i}\|$ becomes too small or too large, it may lead to vanishing or exploding gradients, respectively~\cite{zhang2021dive}, due to repeated multiplication by the same weight matrix during backpropagation. 

While exploding gradients can be mitigated by gradient clipping~\cite{goodfellow2016deep}, vanishing gradients are more challenging to address in our setting because layer normalization cannot be applied under our constraints. Motivated by identity connections, which have been shown to be effective in alleviating vanishing gradients in ResNets~\cite{he2016deep} and through gating mechanisms in LSTMs~\cite{hochreiter1997long}, we modify the network by replacing $\W_2^{i}$ for $i=2,\ldots,L$ with an identity connection. Specifically, the network is modified to
$\W_2^{i} + \I_q$ for $i=2 \cdots L$ where $\I_q$ is the identity matrix with size $q \times q$. 
\begin{equation}\label{eq:lambda_cond3}
\begin{aligned}
\LambdaB =
\Bigg(
&\max_i \Bigg|
eig_i \Bigg(
\Big[
\I-\gamma(\I_{w2}+\I_{w2}^{T})
\Big]^{-1}\\
&\quad \times
\Big[
\H_{22}
+\B_0^T\B^T\P\B\B_0
+\gamma\W^*_{2}
+\epsilon_1\I
\Big]
\Bigg)
\Bigg|
+\epsilon_2
\Bigg)\I.
\end{aligned}
\end{equation}
where $\epsilon_2 > 0,$ $\I_{Lq}$ is an identity matrix size $Lq \times Lq$, and
\begin{equation} \label{eq:Iw2}
\I_{w2} =    
\begin{bmatrix}
\0 & \0 & \cdots & \0 \\
\I_{q} & \0 & \cdots & \0 \\
\vdots & \vdots & \ddots & \vdots \\
\0 & \0 & \I_{q} & \0 \\
\end{bmatrix} 
\in \mathbb{R}^{L q \times L q}
\end{equation}
\subsection{Computational complexity of the algorithm}
The computational complexity per iteration of the training algorithm can be estimated as follows. The computational complexity of the proposed algorithm can be estimated as follows. 
The dominant computations in Eqs.~(21)--(29) involve large matrices of size $(Lq)\times(Lq)$. 
In particular, Eq.~(25) requires matrix multiplications between $(Lq)\times(Lq)$ matrices, resulting in a complexity of $\mathcal{O}\big((Lq)^3\big)$. 
Similarly, the norm computation in Eq.~(26) for a matrix of size $(Lq)\times(Lq)$ also incurs a complexity of $\mathcal{O}\big((Lq)^3\big)$. 
In addition, Eq.~(31) requires an eigenvalue decomposition of an $(Lq)\times(Lq)$ matrix, whose complexity is $\mathcal{O}\big((Lq)^3\big)$. 
Thus, the per-iteration complexity of the proposed algorithm is dominated by cubic-order matrix operations, yielding $\mathcal{O}\big((Lq)^3\big)$. 
In contrast, Neural ODE-based continuous-time approaches~\cite{okamoto2025learning} depend on both the ODE solver and network structure, with rough complexity $N_{\mathrm{eval}}\mathcal{O}\big((Lq)^2\big)$, where $N_{\mathrm{eval}}$ is the number of solver function evaluations. 
Since $N_{\mathrm{eval}}$ can reach hundreds or thousands of effective layers~\cite{finlay2020train, chen2018neural}, it often becomes a major computational bottleneck.

\section{Case studies} \label{sec:casestudy}
In this section, the proposed dissipative network will be evaluated through the following robotic modeling examples: a mass-spring-damper model and n-DOF manipulaltor dynamics, which were both proven to be incrementally dissipative. Note that the following storage functions are used only to establish the dissipativity properties of the systems and are not required as prior information by our algorithm. The matrix $\P$ associated with the storage function of the proposed dissipative neural network is learned during training as shown in Eq.(~\ref{eq:P}).
\subsection{Mass-spring-damper system}
The dynamics of a mass--spring--damper (MSD) system are given by
\begin{equation}
    m\ddot{x}(t) + b\dot{x}(t) + kx(t) = u(t),
\end{equation}
where $m$, $b$, and $k$ denote the mass, damping, and spring coefficients, respectively.
The input is $f(t)$ and the position output is $x(t)$.

From the energy-based analysis (see Appendix~\ref{appendix:msd}), the mass--spring--damper (MSD) system is
\emph{incrementally dissipative}. For two trajectories $(x_1,\dot{x}_1,u_1)$ and $(x_2,\dot{x}_2,u_2)$, define
$\Delta x := x_1-x_2,\quad \Delta \dot{x} := \dot{x}_1-\dot{x}_2,\quad \Delta u := u_1-u_2.$
The incremental storage function is chosen as
\begin{equation}\label{eq:inc_storage_msd}
V_\Delta(\Delta x,\Delta \dot{x})
= \frac{1}{2}k(\Delta x)^2 + \frac{1}{2}m(\Delta \dot{x})^2,
\end{equation}
and the incremental supply rate is defined by
\begin{equation}\label{eq:inc_supply_msd}
w_\Delta(\Delta u,\Delta \dot{x})
= \Delta \dot{x}\,\Delta u
=
\begin{bmatrix}\Delta \dot{x} \\ \Delta f\end{bmatrix}^{T}
\begin{bmatrix}0 & \frac{1}{2}\\ \frac{1}{2} & 0\end{bmatrix}
\begin{bmatrix}\Delta \dot{x} \\ \Delta f\end{bmatrix}.
\end{equation}
\begin{figure}[!t]
\centering

\begin{minipage}[b]{0.38\linewidth}
    \centering
    \subfloat[Noise-free training.]{
        \includegraphics[width=\linewidth]{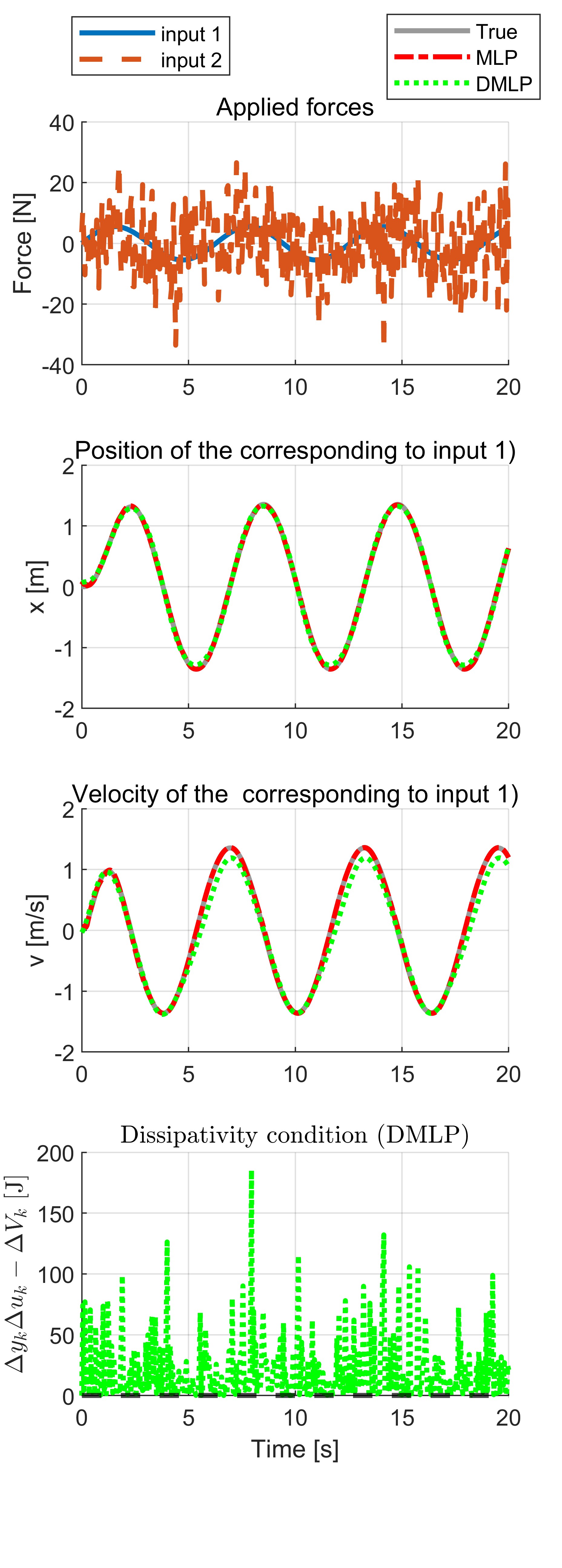}
        \label{fig:compare_msd_incremental}
    }
\end{minipage}
\hfill
\begin{minipage}[b]{0.42\linewidth}
    \centering
    \subfloat[Noisy training.]{
        \includegraphics[width=0.85\linewidth]{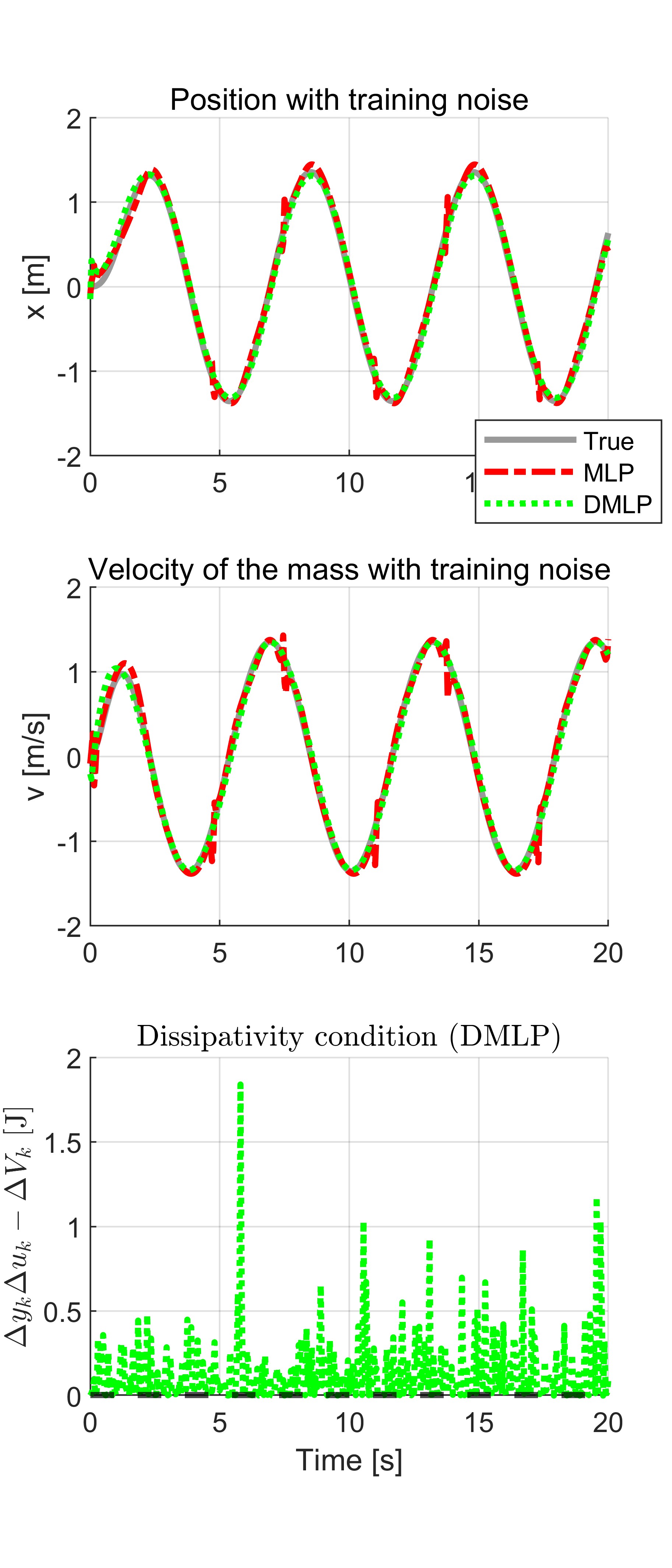}
        \label{fig:compare_msd_incremental_noise}
    }

    \vspace{0.6em}

    \subfloat[Dissipativity condition of the MLP model]{
        \includegraphics[width=0.82\linewidth]{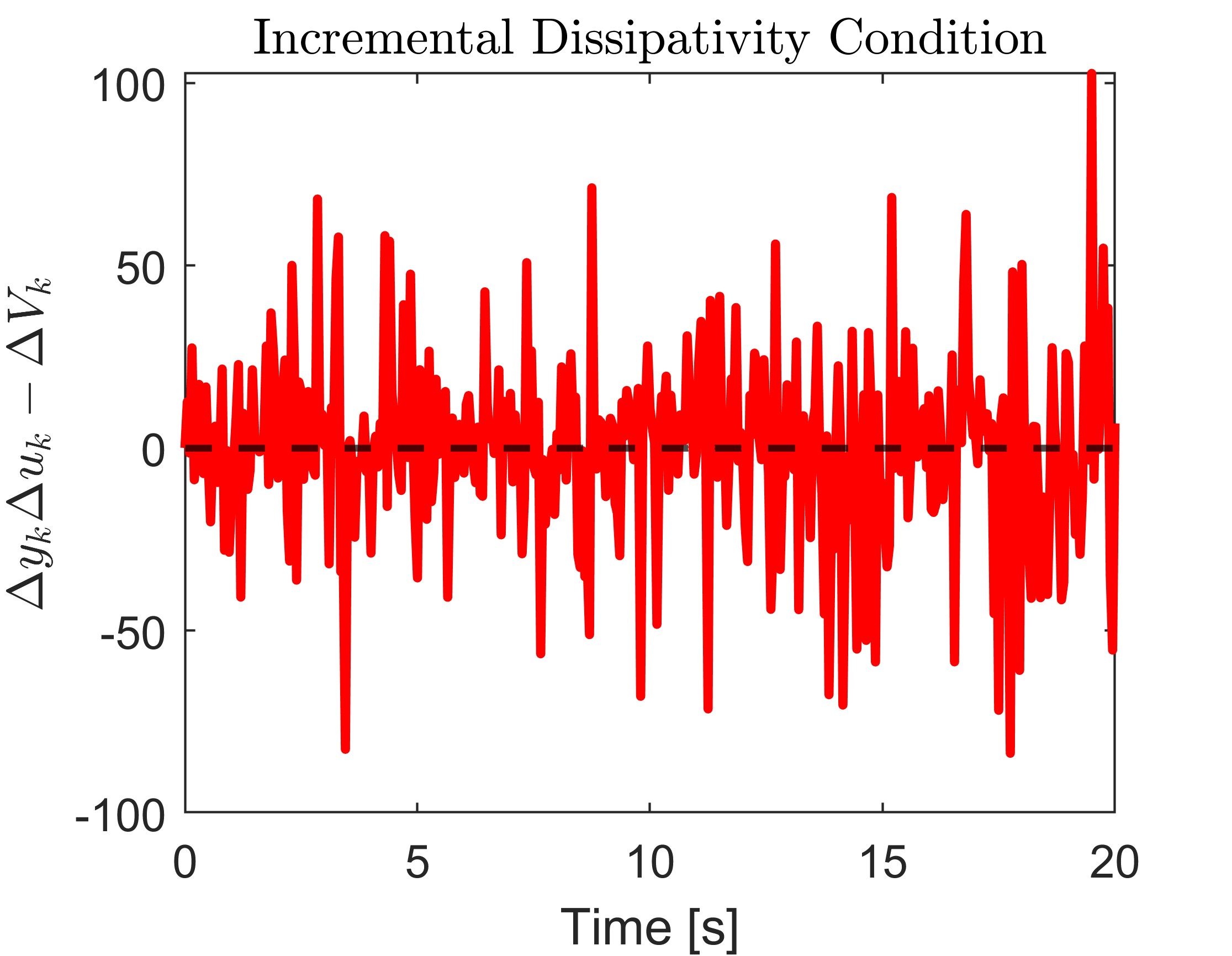}
        \label{fig:mlp_diss_check}
    }
\end{minipage}

\caption{Performance comparison for the mass--spring--damper (MSD) system. 
(a) Noise-free case, where data from $0$--$10\,\mathrm{s}$ are used for training and data from $10$--$20\,\mathrm{s}$ for testing; an additional disturbed input is used to verify incremental dissipativity. 
(b) Noisy case with Gaussian training noise $(\sigma=0.3)$, showing that the proposed dissipative network maintains good prediction accuracy and robustness. 
(c) The standard MLP without dissipativity constraints fails to preserve incremental dissipativity, evaluated using Eq.~(\ref{eq:qsrCond}) with known MSD parameters.}
\label{fig:compare_msd_all}
\end{figure}
\subsection{n-DOF manipulator dynamics}

The dynamics of an $n$-DOF manipulator system are given by
\begin{equation}
    \M \ddot{\q}(t) + \D \dot{\q}(t) + \K \q(t) = \tau(t),
\end{equation}
where $\q(t)\in\mathbb{R}^{n}$ is the joint position vector, $\tau(t)\in\mathbb{R}^{n}$ is the input torque vector, and $\M$, $\D$, and $\K$ denote the inertia, damping, and stiffness matrices, respectively. 
The input is $\tau(t)$ and the output is chosen as the joint velocity $\dot{\q}(t)$. From the energy-based analysis (see Appendix~\ref{appendix:ndof_manipulator}), the $n$-DOF manipulator system is
\emph{incrementally dissipative}. For two trajectories 
$(\q_1,\dot{\q}_1,\tau_1)$ and $(\q_2,\dot{\q}_2,\tau_2)$, define
\[
\Delta \q := \q_1-\q_2,\quad
\Delta \dot{\q} := \dot{\q}_1-\dot{\q}_2,\quad
\Delta \tau := \tau_1-\tau_2 .
\]
The incremental storage function is chosen as
\begin{equation}\label{eq:inc_storage_ndof}
V_\Delta(\Delta \q,\Delta \dot{\q})
=
\frac{1}{2}\Delta \dot{\q}^{T}\M \Delta \dot{\q}
+
\frac{1}{2}\Delta \q^{T}\K \Delta \q,
\end{equation}
and the incremental supply rate is defined by
\begin{equation}\label{eq:inc_supply_ndof}
w_\Delta(\Delta \tau,\Delta \dot{\q})
=
\Delta \dot{\q}^{T}\Delta \tau
=
\begin{bmatrix}
\Delta \dot{\q} \\
\Delta \tau
\end{bmatrix}^{T}
\begin{bmatrix}
\0 & \frac{1}{2}\I \\
\frac{1}{2}\I & \0
\end{bmatrix}
\begin{bmatrix}
\Delta \dot{\q} \\
\Delta \tau
\end{bmatrix}.
\end{equation}
\section{Evaluation results} \label{sec:evaluation}
The proposed incrementally dissipative network (DMLP) is used to identify the discrete-time input--output dynamics of the considered systems. 
To isolate the effect of enforcing incremental dissipativity by construction, we compare DMLP against a baseline multilayer perceptron (MLP) trained purely by supervised learning without any dissipativity constraint.

The incremental dissipativity of the proposed DMLP is evaluated using the discrete-time incremental inequality
\begin{equation}\label{eq:inc_diss_eval}
    \Delta y_k \Delta u_k - \Delta V_k \geq 0,
\end{equation}
where $\Delta u_k := u_k-u_{k-1}$, $\Delta y_k := y_k-y_{k-1}$, and $\Delta V_k := V_k - V_{k-1}$ is the storage increment. 
The inequality in \eqref{eq:inc_diss_eval} is checked along the rollout trajectories and visualized in the dissipativity plots. Implementation configurations and hyperparameters are showen in Appendix~\ref{appx:hyperparameter}.
\subsection{Results on the prediction performance of MSD and n-DOF manipulator dynamics with and without training noises}
The evaluation results for the MSD system are shown in Fig.~\ref{fig:compare_msd_all}. 
Fig.~\ref{fig:compare_msd_incremental} compares the baseline MLP and the proposed DMLP under external forcing inputs. 
Although both models capture the main oscillatory behavior of position and velocity, the MLP shows larger transient deviations, whereas DMLP provides more consistent tracking. 
Moreover, the dissipativity plot verifies that DMLP satisfies the incremental dissipativity condition along the rollout. 
This is evaluated using the training input (Input 1) and an additional disturbed input (Input 2), as shown in the top subplot. 
In contrast, Fig.~\ref{fig:mlp_diss_check} shows that the standard MLP fails to preserve incremental dissipativity, evaluated using Eq.~(3) with known MSD parameters.

Fig.~\ref{fig:compare_msd_incremental_noise} presents the noisy-training case, where the outputs are corrupted by Gaussian noise. 
The baseline MLP becomes more sensitive to noise and exhibits larger prediction errors, while DMLP maintains accurate predictions and preserves incremental dissipativity. 
The quantitative results in Table~\ref{tab:comparison_all_systems} further confirm this trend. 
For example, when $N=100$ under noisy training, DMLP reduces the mean prediction error from $38.9\times10^{-2}$ to $14.3\times10^{-2}$, demonstrating its robustness and advantage in the low-data regime.

\begin{table}[t]
\centering
\caption{Mean prediction errors ($\pm$ std) of MLP and DMLP on the MSD, 2-DOF, and 3-DOF systems for different training sizes $N$ and noise settings.}
\label{tab:comparison_all_systems}
\renewcommand{\arraystretch}{1.25}
\setlength{\tabcolsep}{8pt}
\begin{tabular}{c|c|c|c|c}
\hline
\textbf{System} & \textbf{$N$} & \textbf{Noise} & \textbf{MLP} & \textbf{DMLP} \\
\hline

\multirow{4}{*}{MSD}
& \multirow{2}{*}{100}
& Yes & $38.9 \pm 0.02$ & $14.3 \pm 0.02$ \\
\cline{3-5}
& & No  & $11.1 \pm 0.03$ & $8.21 \pm 0.01$ \\
\cline{2-5}
& \multirow{2}{*}{200}
& Yes & $5.02 \pm 0.01$ & $4.42 \pm 0.00$ \\
\cline{3-5}
& & No  & $2.53 \pm 0.02$ & $3.06 \pm 0.01$ \\
\hline

\multirow{4}{*}{2-DOF}
& \multirow{2}{*}{100}
& Yes & $32.80 \pm 0.02$ & $15.3 \pm 0.01$ \\
\cline{3-5}
& & No  & $23.2 \pm 0.01$  & $14.7 \pm 0.02$ \\
\cline{2-5}
& \multirow{2}{*}{200}
& Yes & $10.3 \pm 0.03$ & $11.2 \pm 0.01$ \\
\cline{3-5}
& & No  & $6.96 \pm 0.02$ & $11 \pm 0.03$ \\
\hline

\multirow{4}{*}{3-DOF}
& \multirow{2}{*}{100}
& Yes & $20.0 \pm 0.02$ & $6.9 \pm 0.01$ \\
\cline{3-5}
& & No  & $10.3 \pm 0.01$ & $5.84 \pm 0.02$ \\
\cline{2-5}
& \multirow{2}{*}{200}
& Yes & $5.3 \pm 0.02$ & $5.95 \pm 0.03$ \\
\cline{3-5}
& & No  & $2.44 \pm 0.01$ & $5.79 \pm 0.02$ \\
\hline

\end{tabular}
\vskip -0.1in
\end{table}

Similar improvements are observed for the manipulator systems: for the 2-DOF system with $N=100$, DMLP reduces the error from $32.80\times10^{-2}$ to $15.3\times10^{-2}$ under noisy training and from $23.2\times10^{-2}$ to $14.7\times10^{-2}$ without noise. 
For the 3-DOF system, the improvement is even more pronounced, with the error reduced from $20.0\times10^{-2}$ to $6.9\times10^{-2}$ under noisy training and from $10.3\times10^{-2}$ to $5.84\times10^{-2}$ without noise. 
These results suggest that the proposed DMLP provides improved robustness and data efficiency, particularly when the amount of training data is limited. The superior modeling performance of DMLP over the standard MLP for the 2-DOF and 3-DOF manipulator dynamics is also visually confirmed in Fig.~\ref{fig:compare_ndof_all}, Appendix~\ref{appendix:ndof_manipulator}.
\subsection{Training loss update}
In addition to trajectory prediction, we compare the training loss histories of the baseline model and the dissipativity-constrained models.
Fig.~\ref{fig:compare_all_loss} reports the loss curves for the baseline MLP, the original dissipative network (DMLP), and the improved dissipative network (DMLP, improved) in the noise-free setting for MSD, 2-DOF manipulator and 3-DOF manipulator. 
The baseline MLP typically converges quickly at the beginning of training, but may exhibit less stable optimization behavior.
In contrast, the dissipativity-constrained training yields smoother and more consistent loss decay. 
Moreover, the improved dissipative training procedure achieves the lowest and most stable loss profile, confirming the effectiveness of the proposed algorithmic refinement.
\begin{figure}[t]
\centering
\includegraphics[width=0.95\linewidth]{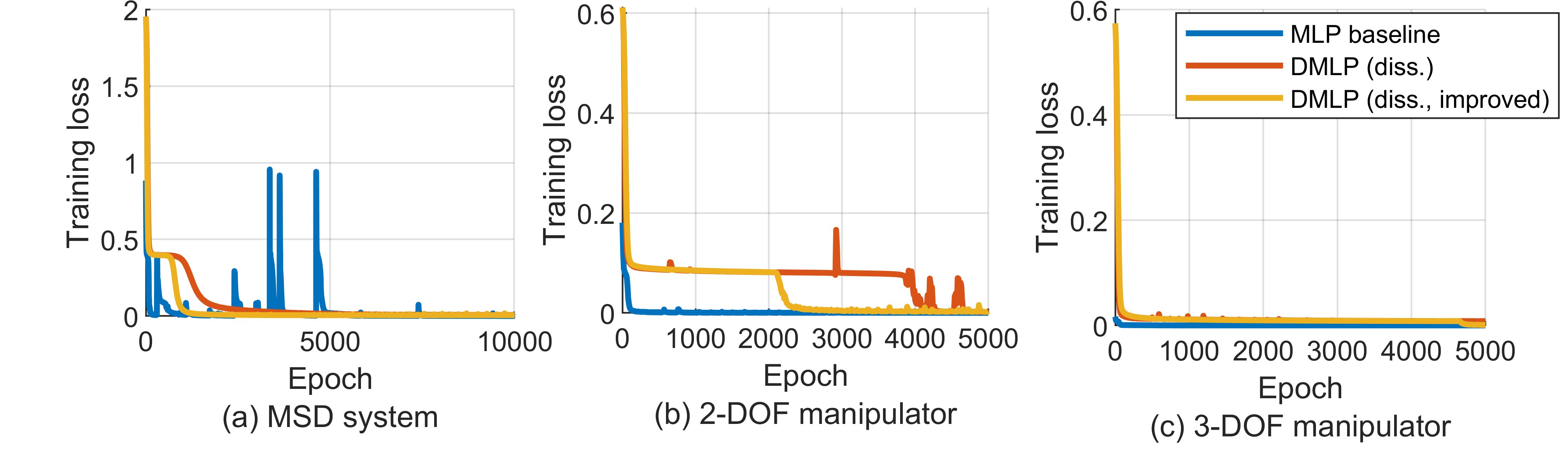}
\caption{Training loss histories of the baseline model, the original dissipative network, and the improved dissipative network, demonstrating the effectiveness of the improved training algorithm. 
(a) MSD system. (b) 2-DOF manipulator. (c) 3-DOF manipulator.}
\label{fig:compare_all_loss}
\end{figure}
\subsection{Comparison with baseline methods}
In addition to demonstrating the modeling performance of our proposed method compared with the standard MLP baseline, we have added a comparison with the state-of-the-art work of~\cite{okamoto2025learning}, which models dissipative systems using a continuous-time Neural ODE approach. We utilized their open-source implementation and evaluated it on the three dynamic systems. As summarized in Table~\ref{tab:performance_comparison}, the proposed DMLP achieves competitive prediction accuracy while being significantly more computationally efficient. For example, on the MSD system, DMLP requires $491.7 \pm 5$ seconds, whereas the Neural ODE approach requires $6.33 \pm 0.2$ hours. These results demonstrate that our discrete-time DMLP provides a practical and computationally efficient alternative for learning dissipative dynamics with comparable modeling performance.
\begin{table}[H]
\centering
\caption{Performance comparison between MLP, DMLP, and Dissipative Neural ODE.}
\label{tab:performance_comparison}
\resizebox{\linewidth}{!}{%
\renewcommand{\arraystretch}{1.35}
\setlength{\tabcolsep}{6pt}
\begin{tabular}{c|c|c|c|c}
\hline
\textbf{Model} 
& \textbf{Performance} 
& \textbf{MLP} 
& \textbf{DMLP} 
& \makecell{\bfseries Dissipative\\Neural ODE} \\
\hline

\multirow{2}{*}{\makecell{MSD\\(10000 epochs)}}
& Mean error
& $2.53 \pm 0.02 \;(\times 10^{-2})$
& $3.06 \pm 0.01 \;(\times 10^{-2})$
& $0.108 \pm 0.03$ \\
\cline{2-5}
& Computation time
& $403 \pm 10$ s
& $491.7 \pm 5$ s
& $6.33 \pm 0.2$ h \\
\hline

\multirow{2}{*}{\makecell{2-DOF manipulator\\(5000 epochs)}}
& Mean error
& $6.96 \pm 0.02 \;(\times 10^{-2})$
& $11 \pm 0.03 \;(\times 10^{-2})$
& $0.15 \pm 0.02$ \\
\cline{2-5}
& Computation time
& $262 \pm 11$ s
& $442 \pm 8$ s
& $4.15 \pm 0.5$ h \\
\hline

\multirow{2}{*}{\makecell{3-DOF manipulator\\(5000 epochs)}}
& Mean error
& $2.44 \pm 0.01 \;(\times 10^{-2})$
& $5.79 \pm 0.02 \;(\times 10^{-2})$
& $0.10 \pm 0.01$ \\
\cline{2-5}
& Computation time
& $302 \pm 15$s
& $525 \pm 10$s
& $7.22 \pm 0.7$ hour \\
\hline
\end{tabular}%
}
\end{table}
An ablation study that confirmed again the effectiveness of our method can be seen in Appendix~\ref{appx:ablation}.
\section{Conclusions} \label{sec:conclusion}
In this work, we proposed a dissipative network model that can learn the dynamics and preserve the dissipative property of a known dissipative nonlinear model. This was realized through the design and learning algorithm of the network structure so that the weights satisfy dissipative constraints and they can be solved with non-constrained optimization method. The proposed network was evaluated and compared with a naive model on a mass-spring-damper system and a flexible joint model, which are incrementally dissipative with particular input-output pairs. The results showed the effectiveness performance of our dissipative network compared in guaranteeing both good modeling performance and physical meaning preservation. 

This is expected to provide a useful tool for designing a more effective and physically meaningful network applicable across a wide range of applications. Research on the modeling of the proposed model on more complex case studies, and extension of the idea to other types of networks will be our future work.

\appendix

\section{\protect\proofdeltaVtitle} 
\label{sec:provedeltaV}
Left and right multiplying Eq.~(\ref{eq:lmi}) by $[\Delta \x_k^T \quad \Delta \z_{k}^T \quad \Delta \u_k^T]$ and $[\Delta \x_k^T \quad \Delta \z_{k}^T \quad \Delta \u_k^T]^T$ respectively, one can obtain
\begin{multline} \label{eq:qsr_expand}
    \Delta \x_k^T \P \Delta \x_k - \Delta \z_{k}^T \B^T \P \B \Delta \z_{k} \\ 
    + \Delta  \x_k^T \C^T \Q \C \Delta \x_k + 2\Delta \x_k^T \D^T \Q \C \Delta \u_k + 2\Delta \x_k^T \C^T \S \Delta \u_k \\ 
    + \Delta \u_k^T \R_1\Delta \u_k  - 2 \gamma \Delta \v_k^T \LambdaB \Delta \z_{k}   + \Delta \z_{k}^T \LambdaB \Delta \z_{k} \geq 0 
\end{multline} 
taking into account that $\Delta \v_k = \W_{u} \Delta \u_k + \W_{2} \Delta \z_{k} + \W_1\Delta \x_{k}$, and $V_{k+1}(\Delta \x_{k+1})-V_{k}(\Delta \x_{k}) =\Delta \x_{k}^T \B^T\P\B \Delta \x_{k} - \Delta \x_{k}^T \P \Delta \x_{k}$, and 
\begin{multline} \label{eq:yu_expand}
    [\Delta \y_k^T \quad \Delta \u_k^T]
    \begin{bmatrix}
        \Q & \S^T \\
        \S    & \R
    \end{bmatrix}
    \begin{bmatrix}
        \Delta \y_k\\
        \Delta \u_k 
    \end{bmatrix} \\
= \Delta \x_k^T \C^T \Q \C \Delta \x_k + 2\Delta \x_k^T \D^T \Q \C \Delta \u_k + \\
2\Delta \x_k^T \C^T \S \Delta \u_k + \Delta \u_k^T (\R +\D^T \Q \D)\Delta \u_k 
\end{multline}
Eq.~(\ref{eq:qsr_expand}) then can be re-written as
\begin{multline}
    V_{k+1}(\Delta \x_{k+1})-V_{k}(\Delta \x_{k}) \leq 
     [\Delta \y_k^T \quad \Delta \u_k^T]
    \begin{bmatrix}
        \Q & \S^T \\
        \S    & \R
    \end{bmatrix}
    \begin{bmatrix}
        \Delta \y_k\\
        \Delta \u_k 
    \end{bmatrix}\\
    + [\Delta \v_k^T \quad \Delta \z_{k}^T]
    \begin{bmatrix}
        \rho \LambdaB & -\gamma \LambdaB \\
         -\gamma \LambdaB    & \LambdaB
    \end{bmatrix}
    \begin{bmatrix}
        \Delta \v_k\\
        \Delta \z_{k} 
    \end{bmatrix}
\end{multline}

\section{\protect\proofSlopeTitle}
\label{sec:slopeproof}
The slopes of widely-used activation functions $\phi()$, such as ReLU, tanh, exponential linear functions, are restricted by the range [alpha, beta]. We  have
\begin{equation}
    \alpha \Delta \v_k \leq \Delta \z_{k} = \phi(\v^{(1)}_k)-\phi(\v^{(2)}_k)\leq \beta \Delta \v_k
\end{equation}
Therefore
\begin{equation}
    (\Delta \z_{k}-\alpha \Delta \v_k)\LambdaB ( \Delta \z_{k} -\beta \Delta \v_k)
\end{equation}

\begin{equation} \label{eq:slope}
    [\Delta \v_k^T \quad \Delta \z_{k}^T]
    \begin{bmatrix}
        \rho \LambdaB & -\gamma \LambdaB \\
         -\gamma \LambdaB    & \LambdaB
    \end{bmatrix}
    \begin{bmatrix}
        \Delta \v_k\\
        \Delta \z_{k}
    \end{bmatrix} \leq 0
\end{equation}
where, $\rho = \alpha \beta, \gamma = \frac{1}{2}(\alpha + \beta)$. 
\section{Choose $\D$ such that} \label{sec:D}
\begin{equation} \label{eq:R}
    \R_1 = \R + \S \D + \D^T \S^T + \D^T \Q \D \succ 0
\end{equation}
It is noticed that Eq.~(\ref{eq:R}) will be satisfied if
\begin{equation} \label{eq:D_cond}
\L = \R + \S \D + \D^T \S^T + \D^T \Q_1 \D \succ 0 
\end{equation}
where $\Q_1 = \Q - \epsilon \I $ with $\I \in \mathbb{R}^{p \times p}$, and $\epsilon$ is a sufficiently small constant to guarantee that $\Q_1 \prec 0$. $\epsilon$ can be chosen as 0 in case $\Q \prec \0$. Then $\Q_1$ can be represented as $\Q_1 = -\L_q^T \L_q$.

$\D$ can be chosen as follow~\cite{revay2023recurrent}
\begin{equation} \label{eq:D_cond3}
    \D = -\Q_1^{-1} \S^T + \L_q^{-1} \N \L_r
\end{equation}
where $\L_r\in \mathbb{R}^{m \times m}$ is found from Eq.~(\ref{eq:Lr})
\begin{equation} \label{eq:Lr}
    \R -\S \Q_1 \S^{T} = \L_r^T \L_r
\end{equation}
$\N$ and $\M$ can be chosen as follows

if $p \geq m$
\begin{align}
\label{eq:MN}
    \M=\Y_1^T\Y_1 + \Y_2 - \Y_2^T + \Y_3^T\Y_3 + \epsilon_3 \I_m \\
    \N = \begin{bmatrix}
        (\I_m-\M)(\I_m+\M)^{-1}\\
        -2\Y_3(\I_m + \M)^{-1}
    \end{bmatrix}
\end{align} 
where $\Y_1\in \mathbb{R}^{m \times m}, \Y_2\in \mathbb{R}^{m \times m}$, and $\Y_3\in \mathbb{R}^{(p-m) \times m}$ are free variables, $\epsilon_3 > 0$, $\I_m$ is the identity matrix with size $ m \times m$.

if $p < m$
\begin{align}
\label{eq:MN2}
    \M=\Y_1^T\Y_1 + \Y_2 - \Y_2^T + \Y_3^T\Y_3 + \epsilon_3 \I_p \\
    \N = \begin{bmatrix}
        (\I_m+\M)^{-1}(\I_m-\M)  \quad      -2(\I_m + \M)^{-1}\Y_3^T
    \end{bmatrix}
\end{align} 
where $\Y_1\in \mathbb{R}^{p \times p}, \Y_2\in \mathbb{R}^{p \times p}$, and $\Y_3\in \mathbb{R}^{(m-p) \times p}$ are free variables, $\I_p$ is the identity matrix with size $ p \times p$.

It can be proven that $\N\N^T \prec \I$ and $\R_1 = \L_r^T(\I - \N\N^T)\L_r \succ 0$. where $\I$ the identity matrix with the same size as $\N\N^T$.
\section{Incremental dissipativity of MSD.} \label{appendix:msd}
Consider
\[
m\ddot x+b\dot x+kx=u,\qquad y=\dot x,
\]
with $m>0,\;b\ge0,\;k>0$. For two trajectories $(x_i,\dot x_i,u_i)$, define
\[
\delta x=x_1-x_2,\quad \delta v=\dot x_1-\dot x_2,
\]
\[
\quad \delta u=u_1-u_2,\quad \delta y=y_1-y_2=\delta v.
\]
Subtracting dynamics gives
\[
m\delta\dot v+b\delta v+k\delta x=\delta u,\qquad \delta\dot x=\delta v.
\]
Choose storage
\[
V=\frac12 k(\delta x)^2+\frac12 m(\delta v)^2.
\]
Then
\[
\dot V=k\delta x\,\delta\dot x+m\delta v\,\delta\dot v
      =k\delta x\,\delta v+\delta v(\delta u-b\delta v-k\delta x)
\]
\[
      =\delta y\,\delta u-b(\delta y)^2
      \le \delta y\,\delta u.
\]
Hence the MSD system is incrementally dissipative (incrementally passive) from $u$ to $y=\dot x$.
\section{Incremental dissipativity of the $n$-DOF manipulator and modeling performance}
\label{appendix:ndof_manipulator}

Consider the $n$-DOF manipulator dynamics
\begin{equation}
\M\ddot{\q} + \D\dot{\q} + \K \q = \tau,
\label{eq:ndof_manipulator}
\end{equation}
where \(\q,\tau\in\mathbb{R}^n\). Assume that
\[
\M=\M^\top\succ0,\qquad
\D=\D^\top\succeq0,\qquad
\K=\K^\top\succeq0 .
\]
Define the input--output pair
\begin{equation}
\u=\t,\qquad \y=\dot{\q}.
\end{equation}

Let \((\q_1,\tau_1)\) and \((\q_2,\tau_2)\) be two trajectories of
Eq.~\eqref{eq:ndof_manipulator}. Define the incremental variables
\[
\Delta \q := \q_1-\q_2,\qquad
\Delta \dot{\q} := \dot{\q}_1-\dot{\q}_2,\qquad
\Delta \tau := \tau_1-\tau_2 .
\]
Subtracting the two systems gives
\begin{equation}
\M\Delta \ddot{\q} + \D\Delta \dot{\q} + \K\Delta \q = \Delta \tau .
\label{eq:inc_ndof_manipulator}
\end{equation}

Choose the incremental storage function
\begin{equation}
V(\Delta)
:=
\frac12 \Delta \dot{\q}^{\top}\M\Delta \dot{\q}
+
\frac12 \Delta \q^{\top}\K\Delta \q .
\label{eq:storage_ndof_manipulator}
\end{equation}
Since \(\M\succ0\) and \(\K\succeq0\), we have \(V(\Delta)\ge0\).

Taking the time derivative of \(V\), we obtain
\begin{align}
\dot V
&=
\Delta \dot{\q}^{\top}\M\Delta \ddot{\q}
+
\Delta \q^{\top}\K\Delta \dot{\q}.
\label{eq:Vdot_ndof_1}
\end{align}
Using Eq.~\eqref{eq:inc_ndof_manipulator}, we have
\[
\M\Delta \ddot{\q}
=
\Delta \tau
-
\D\Delta \dot{\q}
-
\K\Delta \q .
\]
Substituting this into Eq.~\eqref{eq:Vdot_ndof_1} gives
\begin{align}
\dot V
&=
\Delta \dot{\q}^{\top}
\left(
\Delta \tau
-
\D\Delta \dot{\q}
-
\K\Delta \q
\right)
+
\Delta \q^{\top}\K\Delta \dot{\q}
\nonumber\\
&=
\Delta \dot{\q}^{\top}\Delta \t
-
\Delta \dot{\q}^{\top}\D\Delta \dot{\q}
-
\Delta \dot{\q}^{\top}\K\Delta \q
+
\Delta \q^{\top}\K\Delta \dot{\q}.
\end{align}
Because \(\K=\K^\top\), we have
\[
-\Delta \dot{\q}^{\top}\K\Delta \q
+
\Delta \q^{\top}\K\Delta \dot{\q}
=0.
\]
Therefore,
\begin{equation}
\dot V
=
\Delta \dot{\q}^{\top}\Delta \tau
-
\Delta \dot{\q}^{\top}\D\Delta \dot{\q}.
\end{equation}
Since \(\D\succeq0\), it follows that
\begin{equation}
\dot V
\le
\Delta \dot{\q}^{\top}\Delta \tau .
\label{eq:inc_dissipativity_ndof_manipulator}
\end{equation}

Thus, the $n$-DOF manipulator is incrementally dissipative from
\[
\u=\tau
\]
to
\[
\y=\dot{\q}
\]
with supply rate
\begin{equation}
w(\Delta \u,\Delta \y)
=
\Delta \u^{\top}\Delta \y
=
\Delta \tau^{\top}\Delta \dot{\q},
\end{equation}
and incremental storage function given by Eq.~\eqref{eq:storage_ndof_manipulator}.
\begin{figure}[H]
\centering

\subfloat[2-DOF manipulator]{
    \includegraphics[width=0.45\linewidth]{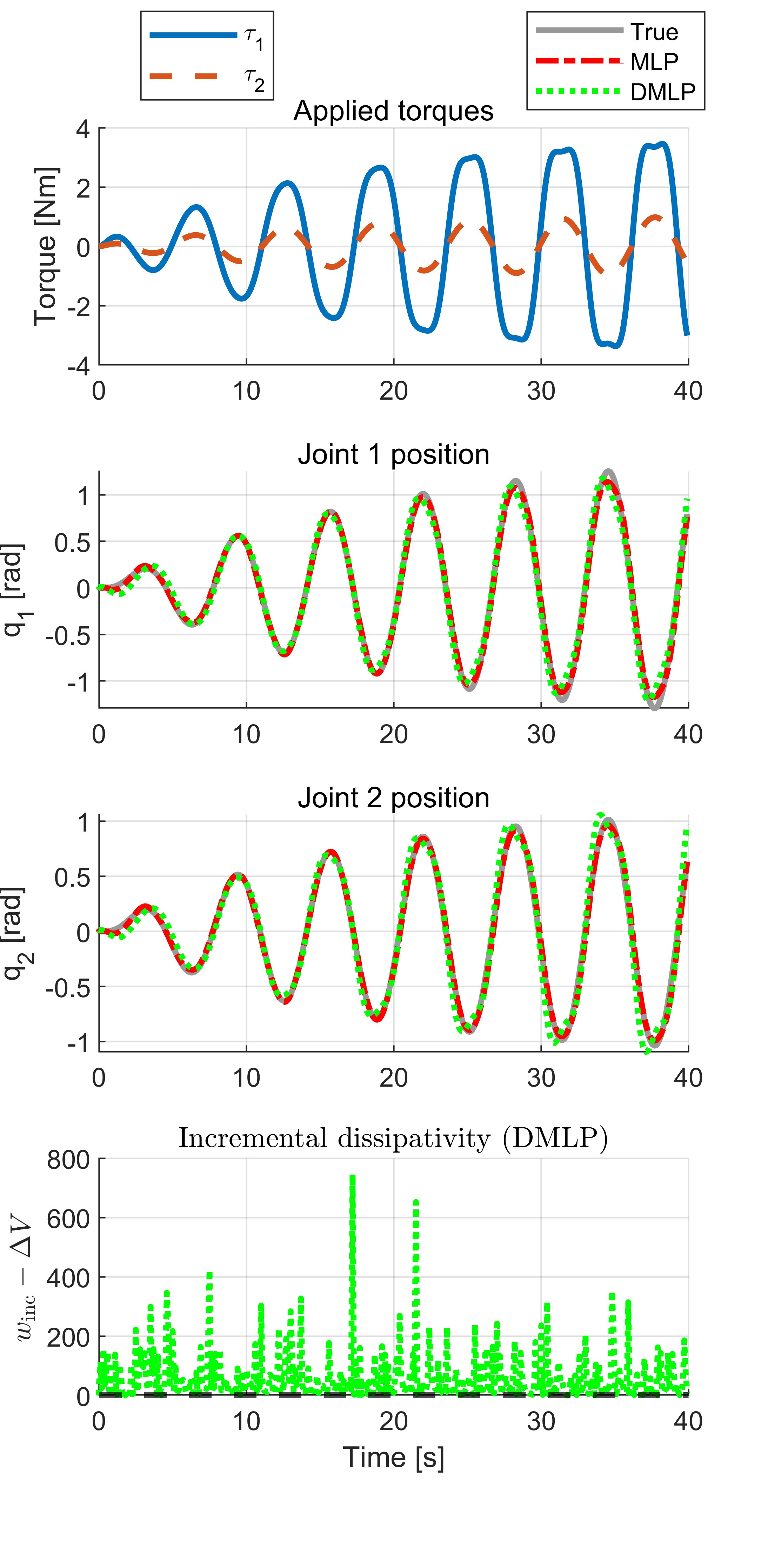}
    \label{fig:compare_2dof_incremental}
}
\hfill
\subfloat[3-DOF manipulator]{
    \includegraphics[width=0.45\linewidth]{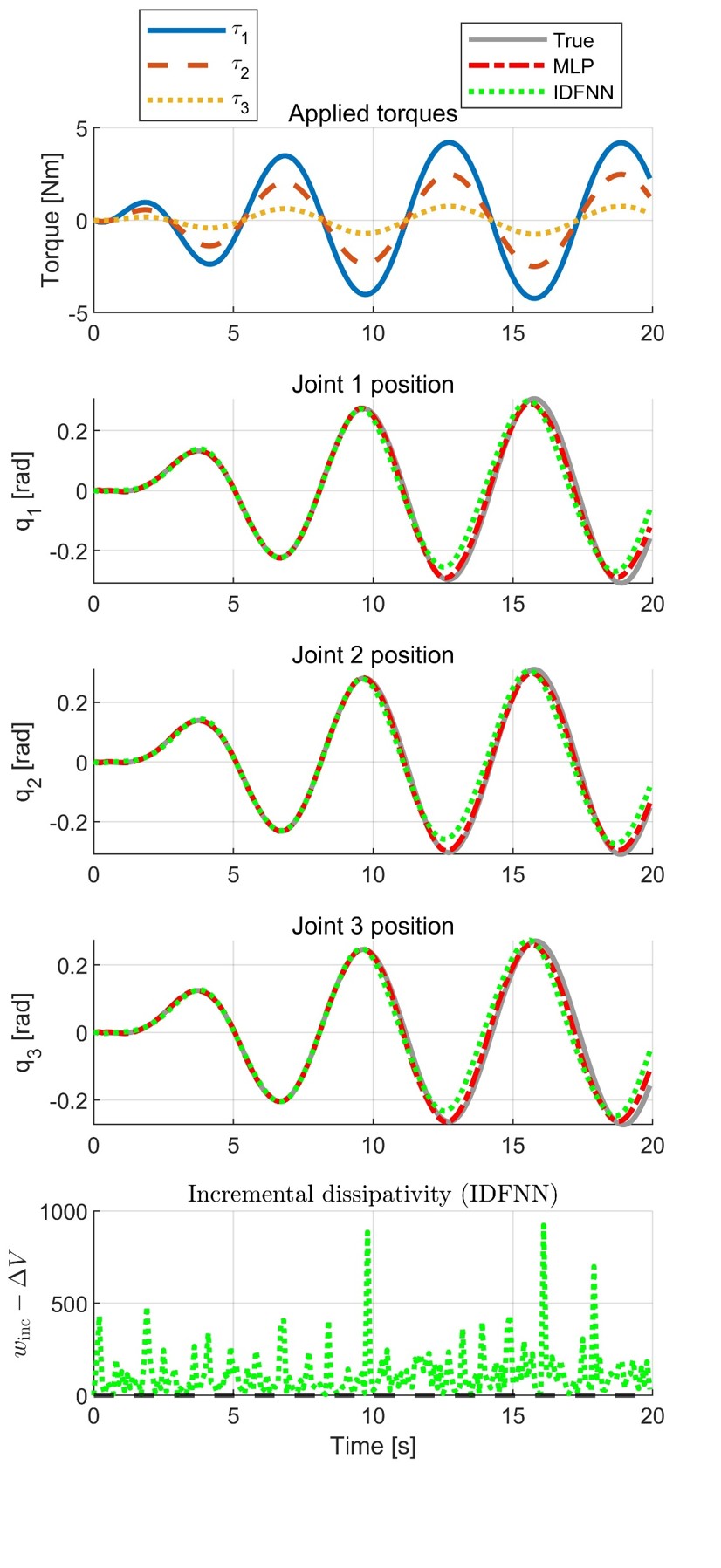}
    \label{fig:3dof_compare_plot}
}

\caption{Comparison of the performance between baseline MLP model and the dissipative model. 
(a) 2-DOF manipulator without training noise, where data from $0\,\mathrm{s}$ to $20\,\mathrm{s}$ were used for training and data from $20\,\mathrm{s}$ to $40\,\mathrm{s}$ were used for testing. (b) 3-DOF manipulator without training noise, where data from $0\,\mathrm{s}$ to $10\,\mathrm{s}$ were used for training and data from $10\,\mathrm{s}$ to $20\,\mathrm{s}$ were used for testing.  In both cases, besides the input used in training, a disturbed input signal was also used to check the incremental dissipativity condition of the proposed network.}
\label{fig:compare_ndof_all}
\end{figure}
\section{Ablation studies} \label{appx:ablation}
An additional ablation study was conducted to further evaluate the contribution of each component of the proposed method. 
As summarized in Table~\ref{tab:ablation_study}, we compare the unconstrained MLP baseline with the dissipativity-constrained DMLP, evaluate the proposed model with and without the improved training algorithm, and test different network configurations, including different depths and activation functions. 
These comparisons are performed across multiple systems, including the MSD system, the 2-DOF manipulator, and the 3-DOF manipulator.

The results show that the dissipativity-constrained DMLP provides more reliable modeling performance than the unconstrained MLP baseline. 
Moreover, the improved training algorithm consistently enhances the performance of DMLP while reducing the training time across the considered systems. 
The additional comparisons with different network depths and activation functions further demonstrate that the proposed framework can be applied under various architectural choices. 
Overall, the ablation results confirm the effectiveness and scalability of the proposed method across systems with increasing dynamical complexity.
\begin{table}[t]
\centering
\caption{Ablation study with the number of data samples $N=100$.}
\label{tab:ablation_study}

\renewcommand{\arraystretch}{1.35}
\setlength{\tabcolsep}{4pt}

\resizebox{\linewidth}{!}{%
\begin{tabular}{cc|cc|cc|cc}
\hline
\multicolumn{2}{c|}{\multirow{2}{*}{\textbf{Ablation cases}}}
&
\multicolumn{2}{c|}{\makecell{\bfseries MSD\\\bfseries (10000 epochs)}}
&
\multicolumn{2}{c|}{\makecell{\bfseries 2-DOF manipulator\\\bfseries (5000 epochs)}}
&
\multicolumn{2}{c}{\makecell{\bfseries 3-DOF manipulator\\\bfseries (5000 epochs)}}
\\
\cline{3-8}

\multicolumn{2}{c|}{}
&
\makecell{Mean error\\$(\times 10^{-2})$}
&
\makecell{Training\\time}
&
\makecell{Mean error\\$(\times 10^{-2})$}
&
\makecell{Training\\time}
&
\makecell{Mean error\\$(\times 10^{-2})$}
&
\makecell{Training\\time}
\\
\hline

\multirow{2}{*}{\makecell{Dissipativity\\constraints}}
& No (MLP)
& $2.53 \pm 0.02$
& $403 \pm 10$ s
& $6.96 \pm 0.02$
& $262 \pm 11$ s
& $2.44 \pm 0.01$
& $302 \pm 15$ s
\\

& Yes (DMLP)
& $3.06 \pm 0.01$
& $491.7 \pm 5$ s
& $11 \pm 0.03$
& $423 \pm 10$ s
& $5.79 \pm 0.02$
& $525 \pm 10$ s
\\
\hline

\multirow{2}{*}{\makecell{Training algorithm\\(DMLP)}}
& Improved
& $3.06 \pm 0.01$
& $491.7 \pm 13$ s
& $11 \pm 0.03$
& $423 \pm 10$ s
& $5.79 \pm 0.02$
& $525 \pm 10$ s
\\

& Original
& $4.98 \pm 0.02$
& $625 \pm 12$ s
& $13.4 \pm 0.02$
& $472 \pm 12$ s
& $6.02 \pm 0.03$
& $578 \pm 11$ s
\\
\hline

\multirow{4}{*}{\makecell{Number of\\layers\\(DMLP)}}
& 1
& $5.02 \pm 0.01$
& $500 \pm 15$ s
& $15.5 \pm 0.01$
& $412 \pm 13$ s
& $7.33 \pm 0.02$
& $502 \pm 13$ s
\\

& 2
& $3.06 \pm 0.01$
& $491.7 \pm 5$ s
& $11 \pm 0.03$
& $423 \pm 10$ s
& $5.79 \pm 0.02$
& $525 \pm 10$ s
\\

& 3
& $4.89 \pm 0.03$
& $702 \pm 11$ s
& $13.8 \pm 0.03$
& $445 \pm 15$ s
& $6.02 \pm 0.03$
& $565 \pm 9$ s
\\

& 4
& $4.92 \pm 0.04$
& $732 \pm 10$ s
& $12.3 \pm 0.04$
& $492 \pm 13$ s
& $6.25 \pm 0.06$
& $602 \pm 15$ s
\\
\hline

\multirow{3}{*}{\makecell{Activation\\functions\\(DMLP)}}
& Tanh
& $3.06 \pm 0.01$
& $491.7 \pm 5$ s
& $11 \pm 0.03$
& $423 \pm 10$ s
& $5.79 \pm 0.02$
& $525 \pm 10$ s
\\

& Sigmoid
& $4.02 \pm 0.03$
& $633 \pm 11$ s
& $13.6 \pm 0.01$
& $481 \pm 11$ s
& $6.36 \pm 0.05$
& $536 \pm 10$ s
\\

& ReLU
& $4.08 \pm 0.01$
& $614 \pm 10$ s
& $14.0 \pm 0.03$
& $416 \pm 9$ s
& $5.72 \pm 0.06$
& $510 \pm 8$ s
\\
\hline

\end{tabular}%
}
\end{table}
\section{Implementation settings and Hyperparameters} \label{appx:hyperparameter}
All simulations were conducted on a desktop workstation with an Intel(R) Core(TM) i9-9900K CPU @ 3.6\,GHz, 128\,GB RAM, and an NVIDIA GeForce RTX 2080 Ti GPU. 
Both MLP and DMLP were trained by minimizing the mean squared error (MSE) loss between the predicted and true system outputs. 
Specifically, the loss function is defined using \texttt{nn.MSELoss()}, and the model parameters are optimized using the Adam optimizer with learning rate $10^{-3}$. 

In our experiments, approximately $25$--$50\%$ of the available dataset is used for training, while the remaining samples are reserved for testing. 
This split allows us to evaluate the generalization ability of the learned model under limited training data.
For the MSD system, all models were trained for 10000 epochs, while for the 2-DOF and 3-DOF manipulator systems all models were trained for 5000 epochs. The hyperparameters of the DMLP and MLP networks are identical and are in Table~\ref{tab:simulation_hyperparameters}.
\begin{table}[H]
\centering
\caption{Hyperparameters used in the simulation.}
\label{tab:simulation_hyperparameters}
\renewcommand{\arraystretch}{1.2}
\setlength{\tabcolsep}{8pt}
\begin{tabular}{lccc}
\hline
\textbf{Hyperparameter} & \textbf{MSD} & \textbf{2-DOF} & \textbf{3-DOF} \\
\hline
Input size & 1 & 2 & 3 \\
Hidden size & 32 & 32 & 32 \\
Output size & 2 & 4 & 6 \\
Number of layers & 2 & 2 & 2 \\
Activation function & tanh & tanh & tanh \\
\hline
\end{tabular}
\end{table}

\newpage


\bibliographystyle{ieeetr}
\bibliography{reference}

\end{document}